\renewcommand{\(}{\left(}
\renewcommand{\)}{\right)}
\newcommand{\n}{\boldsymbol{n}}
\newcommand{\Sig}{\boldsymbol{\Sigma}}
\newcommand{\y}{\boldsymbol{y}}
\renewcommand{\H }{\boldsymbol{H}}
\newcommand{\R}{\mathbf{R}}
\newcommand{\1}{\mathbf{1}}
\newcommand{\x}{\boldsymbol{x}}
\newcommand{\C}{\mathbf{C}}
\newcommand{\A}{\boldsymbol{A}}
\newcommand{\Q}{\mathbf{Q}}
\newcommand{\q}{\boldsymbol{q}}
\newcommand{\EE}[1]{{\rm{E}}\left[#1\right]}
\newcommand{\norm}[1]{\left\|#1\right\|}
\renewcommand{\arg}[1]{{\rm{arg}}#1}
\newtheorem{definition}{Definition}
\newtheorem{theorem}{Theorem}
\begin{document}





\title{Learning to Estimate Without Bias}

\author[1]{Tzvi Diskin}
\author[2]{Yonina C. Eldar}%
\author[1]{Ami Wiesel}%
\affil[1]{The Hebrew University of Jerusalem}
\affil[2]{Weizmann Institute of Science}

\maketitle

\begin{abstract}
The Gauss–Markov theorem states that the weighted least squares estimator is a linear minimum variance unbiased estimation (MVUE) in linear models. In this paper, we take a first step towards extending this result to  non-linear settings via deep learning with bias constraints. The classical approach to designing non-linear MVUEs is through maximum likelihood estimation (MLE) which often involves computationally challenging optimizations. On the other hand, deep learning methods allow for non-linear estimators with fixed computational complexity. Learning based estimators perform optimally on average with respect to their training set but may suffer from significant bias in other parameters. To avoid this, we propose to add a simple bias constraint to the loss function, resulting in an estimator we refer to as Bias Constrained Estimator (BCE). We prove that this yields asymptotic MVUEs that behave similarly to the classical MLEs and asymptotically attain the Cramer Rao bound. We demonstrate the advantages of our approach in the context of signal to noise ratio estimation as well as covariance estimation. A second motivation to BCE is in applications where multiple estimates of the same unknown are averaged for improved performance. Examples include distributed sensor
networks and data augmentation in test-time. In such applications, we show that BCE leads to asymptotically consistent estimators.

\end{abstract}

\section{Introduction}

Parameter estimation is a fundamental problem in many areas of science and engineering. The goal is to recover an unknown deterministic parameter $\y$ given realizations of random variables $\x$ whose distribution depends on $\y$. An estimator $\hat{\y}(\x)$ is typically designed by minimizing some distance between the observed variables and their statistics, e.g., least squares, maximum likelihood  estimation (MLE) or method of moments \cite{kay1993fundamentals}. Performance is measured in terms of bias, variance and mean squared error (MSE), which depends on the unknowns. For example, MLE is known to be an asymptotic  Minimum Variance Unbiased Estimator (MVUE) for any value of $\y$.  In the last decade, there have been many works suggesting to design estimators using deep learning. Learning based estimators directly minimize the average MSE with respect to a given dataset. Their performance may deteriorate with respect to other values of $\y$. To close this gap, the goal of this paper is to introduce a machine learning framework for Bias Constrained Estimators (BCE).

The starting point to our work are the classical performance measures in parameter estimation. Statistics define the MSE and the bias of an estimate $\hat \y(\x)$ of $\y$ as (see for example Eq. 2.25 in \cite{friedman2017elements}, Chapter 2 in. \cite{kay1993fundamentals} or  \cite{lehmann1948completeness}):
\begin{align}\label{metrics_def}
{\rm{MSE}}_{\boldsymbol{\hat{y}}}(\boldsymbol{y})&=\rm{E}\left[\left\|\hat{\y}(\x)-\y\right\|^2\right]  \nonumber\\
{\rm{BIAS}}_{\boldsymbol{\hat{y}}}(\boldsymbol{y})&=\rm{E}\left[\hat{\y}(\x)-\y\right].
\end{align}
The ``M'' in the MSE and in all the expectations in (\ref{metrics_def}) are with respect to to the distribution of $\x$ parameterized by $\y$. The unknowns are not integrated out and the metrics are functions of $\y$ \cite{eldar2008rethinking}. The goal of parameter estimation is to minimize the MSE for any value of $\y$. This problem is ill-defined as different estimators are better for different values of $\y$. Therefore, it is standard to focus on minimizing the MSE only among unbiased estimators, that have zero bias for any value of $\y$. Fisher information provides a lower bound on this MSE known as the Cramer Rao bound (CRB). An unbiased estimator which achives the CRB is called an MVUE. Remarkably, the popular MLE is often asymptotically MVUE. See \cite{kay1993fundamentals} for more details on this topic. 


A competing framework is Bayesian statistics in which $\y$ is modeled as a random vector with a known prior. The goal of Bayesian estimation is also to minimize the MSE. But the definition of MSE is slightly different as the expectation is taken also with respect $\y$ (see for example Chapter 10 in \cite{kay1993fundamentals} or Chapter 2.4 in \cite{friedman2017elements}). To avoid confusion, we refer to this metric as BMSE :
\begin{equation}\label{bmse}
    {\rm{BMSE}}_{\boldsymbol{\hat{y}}} = \EE{{\rm{MSE}}_{\boldsymbol{\hat{y}}}(\y)}.
\end{equation}
Unlike MSE, BMSE is  not a function of $\y$. BMSE has a well defined minima but it is optimal on average with respect to $\y$ and depends on the chosen prior. In practice, this prior is often fictitious and does not really represent any prior knowledge on $\y$. Indeed, it is common to use simple uniform or wide Gaussian priors. Partial solutions to this issue include non-informative priors as Jeffrey's prior which require the Fisher information (or CRB) \cite{kass1996selection} and minimax approaches that are often too pessimistic  \cite{eldar2005minimax}.

In recent years, there is a growing trend of solving classical parameter estimation problems using deep learning. In brief, these methods are approximations to Bayesian methods where the underlying distribution is represented by a training dataset and the optimal solutions are implemented via expressive deep neural networks. This approach is advantageous when it is easy to collect or generate data but we do not have access to the exact probabilistic models. It also leads to networks that have fixed computational complexity which are often preferable. Examples include many fields including image reconstruction \cite{ongie2020deep,dong2015image,shlezinger2023model}, phase retrieval \cite{naimipour2020unfolded}, magnetic resonance imaging \cite{schlemper2018stochastic}, frequency estimation \cite{izacard2019data,dreifuerst2021signalnet}, Direction of arrival \cite{merkofer2022deep,shmuel2023deep}, channel estimation \cite{fesl2023low} and robust regression \cite{diskin2017deep}. Recent works also consider the use of neural networks for efficiently computing the theoretical Fisher information and CRBs \cite{habi2023learning,duy2022fisher}.

A main challenge in developing deep learning methods for parameter estimation is the choice of prior for $\y$. By construction, the learned networks are not optimal for specific values of $\y$ but only on average with respect to their training set. The goal of this paper is to close this gap by introducing the learned Bias Constrained Estimator (BCE). It minimizes the BMSE while promoting unbiasedness for every value of $\y$. We prove that BCE converges to an MVUE for every value of $\y$ independently of the prior. Specifically, given a rich enough architecture and a sufficiently large number of samples, BCE is asymptotically unbiased and achieves the lowest possible MSE for any value of $\y$.


To gain more intuition into the BCE we consider as a special case the linear settings. Here, the classical Gauss Markov theorem states that the Weighted Least Squares (WLS) estimate is a linear MVUE for any unknown parameter. On the other hand, the linear minimal mean square error estimator (LMMSE) is optimal with respect to a specific Bayesian prior. Both of these can be interpreted as instances of the linear BCE (LBCE). The LBCE has a closed form solution which is a regularized LMMSE with a hyper-parameter $\lambda$ that reduces the dependency on the prior. LBCE with $\lambda=0$ reduces to the LMMSE. With a large $\lambda$, LBCE converges to the WLS. Generally, BCE provides a flexible bridge between these two extremes, and extends them to non-linear settings.

Numerical experiments support the theory and show that BCE leads to near-MVUEs for all $\y$. Estimators based on BMSE alone are better on average but can be worse for specific values of $\y$. We demonstrate this in two synthetic settings with a known probabilistic model: signal to noise ratio (SNR) estimation and structured covariance estimation. Next, we illustrate the advantages of BCE in a real world localization problem where we only have access to a training set with no model. The results clearly show that the predictions of BCE are less accurate than its competitors but are unbiased and centered around the ground truth as needed in many applications.


The main purpose of BCE is estimation of deterministic parameters, but we also present an additional use case. Here, the motivation is in the context of averaging estimators in test time. In this setting the goal is to learn a single network that will be applied to multiple inputs and then take their average as the final output. This is the case, for example, in a sensor network where multiple independent measurements of the same phenomena are available. Each sensor applies the network locally and sends its estimate to a fusion center. The global center then uses the average of the estimates as the final estimate \cite{li2009distributed}. Averaging in test-time has also become standard in image classification where the same network is applied to multiple crops at inference time \cite{krizhevsky2012imagenet}. In such settings, unbiasedness of the local estimates is a goal on its own, as it is a necessary condition for asymptotic consistency of the global estimate. BCE enforces this condition and improves accuracy of the global estimator even over the BMSE. We demonstrate this advantage over minimizing the MSE of each of the local estimators in the context of image classification on CIFAR10 dataset with test-time data augmentation.


For completeness, we note that BCE is closely related to the topics of ``fairness'' and ``out of distribution (OOD) generalization'' which have recently attracted considerable attention in the  machine learning literature. The topics are related both in the terminology and in the solutions. 
Fair learning tries to eliminate biases in the training set and considers properties that need to be protected \cite{agarwal2019fair}. OOD works introduce an additional ``environment'' variable and the goal is to train a model that will generalize well on new unseen environments \cite{creager2021environment,maity2020there}. Among the proposed solutions are distributionally robust optimization \cite{bagnell2005robust} which is a type of minmax estimator, as well as invariant risk minimization \cite{arjovsky2019invariant} and calibration constraints \cite{wald2021calibration}, both of which are reminiscent of BCE. A main difference is that in our work the protected properties are the labels themselves. Another core difference is that in parameter estimation  we assume full knowledge of the generative model, whereas the above works are purely data-driven and discriminative.

The paper is organized as follows. In Section II, we formalize the problem and define the bias constrained estimator. Next, in Section III, we prove that it asymptotically converges to the MVUE. In Section IV, we analyze the linear case to get intuition into the way that BCE works. An additional application of averaging in test time is discussed in Section V. We implement BCE using deep neural networks and demonstrate its performance on different settings in section VI. Finally, we conclude and discuss some limitations of the work in Section VII.


\section{Biased Constrained Estimation}
\subsection{Classical Parameter Estimation}

Consider a random vector $\boldsymbol{x}$ whose probability distribution is parameterized by an unknown deterministic vector $\boldsymbol{y}$ in some region $S\subset\mathbb{R}^{D}$. We are interested in the case in which $\boldsymbol{y}$ is a deterministic variable without any prior distribution. We assume exact knowledge of the dependence of $\boldsymbol{x}$ on $\boldsymbol{y}$ via a likelihood function $p\left(\boldsymbol{x};\boldsymbol{y}\right)$. Typically, this knowledge is based on well specified parametric models, e.g. physics based models.  
Our goal is to estimate $\boldsymbol{y}$ given  $\boldsymbol{x}$.

The quality of an estimator $\hat \y(\x)$ is usually measured by the MSE (\ref{metrics_def}) which is a function of the unknown parameter $\y$. Minimizing the MSE for any value of $\y$ is an ill defined problem. A classical approach to bypass this issue is to consider only unbiased estimators as defined below. 
\begin{definition}
An estimator $\boldsymbol{\hat{y}}\left(\boldsymbol{x}\right)$ is called unbiased if it satisfies BIAS$_{\boldsymbol{\hat{y}}}(\boldsymbol{y})=\boldsymbol{0}$ for all $\boldsymbol{y}\in S$. 
\end{definition}
Among the unbiased estimators, the MVUE, as defined below, is optimal:
\begin{definition}
An MVUE is an unbiased estimator $\Tilde{\y}\left(\boldsymbol{x}\right)$ that has a variance lower than or equal to that of any other unbiased estimator for all values of $\y\in S$. 
\end{definition}

An MVUE does not always exists but can be guaranteed under favourable conditions, e.g.,  simple models in the exponential family  \cite[p. 88]{lehmann2006theory}.

In practice, the most common approach to parameter estimation is MLE which is asymptotically near MVUE. It is the solution to the following optimization
\begin{equation}
    \hat{\y}_{\rm{MLE}}=\arg\max_{\y} p(\x;\y).
\end{equation}
Under favourable conditions, MLE is an MVUE. First, this holds when the observation vector $\x=[x_1,\cdots,x_Q]^T$ consists of independent and identically distributed (i.i.d.) elements all conditioned on the same unknown $\y$, and  $q\Q\rightarrow \infty$ along with additional regularity conditions  \cite[p. 164]{kay1993fundamentals}. Second, in specific signal in noise problems, MLE is an MVUE even for short data records if the signal to noise ratio (SNR) is high enough. For more details, see Example 7.6 and Problem 7.15 in \cite{kay1993fundamentals}. Another advantage in these settings is that the performance of MLE attains the CRB: \cite{kay1993fundamentals}:
\begin{align}
    Q\cdot{\rm{cov}}_{\hat{\y}_{\rm{MLE}}}(\y)\overset{Q\rightarrow\infty}{\rightarrow} {\boldsymbol{F}}^{-1}(\y)
\end{align}
where ${\boldsymbol{F}}(\y)$ is the Fisher Information Matrix (FIM)
\begin{equation}
    \boldsymbol{F} = \EE{\left(\frac{\partial{\rm log} p(\x;\y)}{\partial\y}\right)^2},
\end{equation}
where $p(\x; \y)$ is the likelihood of a single sample.

 A main drawback is that it can be computationally expensive. In inference time, MLE requires the solution of a possibly non-convex and non-linear optimization for each observation vector $\x$. In many problems this optimization is intractable or impractical to implement.

\subsection{Deep Learning for Estimation}

The recent deep learning revolution has led many to apply machine learning methods for estimation \cite{dong2015image, ongie2020deep, gabrielli2017introducing, rudi2020parameter, dua2011artificial,dreifuerst2021signalnet}. Deep learning relies on a computationally intensive fitting phase which is done offline, and yields a neural network with fixed complexity that can be easily applied in inference time. Therefore, it is a promising approach for deriving low complexity estimators when MLE is intractable. To avoid confusion, we note that the main motivation to machine learning is usually in data-driven settings, while our main focus is on model-driven settings where deep learning is used to provide low complexity approximations to MLEs and MVUEs.

Deep learning relies on a training dataset, and therefore the first step in using it for parameter estimation is synthetic generation of a dataset 
\begin{equation}
    {\mathcal{D}}_N=\{\boldsymbol{y}_i,\boldsymbol{x}_i\}_{i=1}^N.
\end{equation}
For this purpose, the deterministic  $\boldsymbol{y}$ is assumed random with some fictitious prior $p^{\rm{fake}}(\boldsymbol{y})$ such that  $p^{\rm{fake}}(\y)\neq0$ for all $\y\in S$.  The dataset is then artificially generated according to $p^{\rm{fake}}(\boldsymbol{y})$ and the true $p\left(\boldsymbol{x};\boldsymbol{y}\right)$.
Next, a class of possible estimators ${\mathcal{H}}$ is chosen in order to tradeoff expressive power with computational complexity in test time. In the context of deep learning, the class ${\mathcal{H}}$ is usually a fixed differentiable neural network architecture. Finally, the learned estimator is defined as the minimizer of the empirical MSE (EMMSE):
\begin{equation}\label{EMMSE}
    {\rm{EMMSE}}:\;
   \min_{\hat{\boldsymbol{y}}\in {\mathcal{H}}} \frac 1N \sum_{i=1}^N \left\|\hat{\boldsymbol{y}}(\boldsymbol{x}_i)-\boldsymbol{y}_i \right\|^2.
\end{equation}

Assuming that $\y_i$ are i.i.d. and $N\rightarrow \infty$, the objective of EMMSE converges to the BMSE in (\ref{bmse}), where the expectation over $\y$ is with a respect to the fictitious prior. If ${\mathcal{H}}$ is sufficiently expressive, EMMSE converges to the Bayesian MMSE estimator which can be expressed as $\hat\y(\x)=\EE{\y|\x}$, \cite[p. 346]{kay1993fundamentals}. 

The performance of EMMSE estimators is usually promising but depends on the fictitious prior chosen for $\y$. In some cases, simple fake priors such as uniform or wide Gaussians are sufficient. But, in some settings, the MSE for specific values of $\y$ can be high and unpredictable. See for example the discussion on data generation in \cite{dreifuerst2021signalnet} where a special purpose generation mechanism was developed to avoid such failures. 

\subsection{BCE}
The contribution of this paper is a competing BCE approach that tries to gain the benefits of both worlds, namely learn a low-complexity deep learning estimator which is near MVUE and is less sensitive to the fictitious prior. For this purpose,  we minimize the empirical MSE along with an empirical squared bias regularization. We generate an enhanced dataset 
\begin{eqnarray}
{\mathcal{D}}_{NM}=\left\{\boldsymbol{y}_i,\{\boldsymbol{x}_{ij}\}_{j=1}^M\right\}_{i=1}^N
\end{eqnarray}
where the measurements $\boldsymbol{x}_{ij}$ are all generated  with the same $\boldsymbol{y}_i$. That is, we first generate $N$ samples $\{\y_i\}_{i=1}^N$ using the fictions prior and then we generate $M$ samples $\{x_{ij}\}_{j=1}^M$ for each of $y_i$, using $p(\x;\y_i\}$.
BCE is then defined as the solution to:
\begin{align}\label{BCE}
    {\rm{BCE}}:\;
    &\min_{\hat{\y}\in\mathcal{H}} \frac 1N \sum_{i=1}^N\norm{\hat{\y}(\x_i)-\y_i}^2 \nonumber\\
    &\qquad\qquad+ \lambda\frac 1N\sum_{i=1}^N{\norm{\frac 1M\sum_{j=1}^M{\hat{\y}(\x_{ij})-\y_i}}^2}.
\end{align}
where the second term is an empirical squared bias regularization and $\lambda\geq 0$ is a hyperparameter. By choosing $\mathcal{H}$ to be a differentiable neural network architecture as described above, optimizing (\ref{BCE}) can be done using standard deep learning tools such as stochastic gradient descent (SGD) \cite{shalev2014understanding} and its variants. The only difference from EMMSE is the enhanced dataset and the BCE regularization. The complete method is provided in Algorithm 1 below. 


\begin{algorithm}
\caption{BCE with synthetic data}\label{alg:BCE}
\begin{itemize}
    \item Choose a fictitious prior  $p^{\rm{fake}}(\y)$ .
    \item Generate $N$ samples $\{\y_i\}_{i=1}^N\;\sim\;p^{\rm{fake}}(\y)$.
    \item For each $\y_i,\;i=1,\cdots,N$:
    
    \hspace{4mm}Generate $M$ samples $\{\x_j(\y_i)\}_{j=1}^M\;\sim\;p(\x;\y_i)$.
    \item Solve the BCE optimization in (\ref{BCE}).
\end{itemize}
\end{algorithm}

\section{Asymptotic MVUE analysis}
In this section, we show that, under asymptotic conditions and a sufficiently expressive architecture, BCE converges to an MVUE and behaves like MLE. 

\begin{theorem}
 Assume that
 \begin{enumerate}
     \item An MVUE denoted by $\Tilde{\y}$ exists within ${\mathcal{H}}$. 
     \item The fictitious prior is non-singular, i.e., $p^{\rm{fake}}(\boldsymbol{y})\neq 0$ for all  ${\boldsymbol{y}}\in S$. 
     \item The variance of the MVUE estimate under the fictitious prior is finite: $\int {\rm{VAR}}_{\Tilde{\y}}(\boldsymbol{y}) p^{\rm{fake}}(\boldsymbol{y}) d\boldsymbol{y} < \infty$.
 \end{enumerate}
 Then, BCE converges to MVUE for sufficiently large $\lambda$, $M$ and $N$.
\end{theorem}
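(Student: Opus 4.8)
The plan is to send the three parameters to infinity one at a time, in the order $N$, then $\lambda$, then $M$, reducing the empirical objective to a tractable population functional at each stage. First I would fix an estimator $\hat\y\in\mathcal{H}$ and let $N\to\infty$: since $\{\y_i\}$ and, conditionally on $\y_i$, $\{\x_{ij}\}$ are i.i.d., the law of large numbers turns the BCE objective in (\ref{BCE}) into the population functional
\begin{equation*}
J_{\lambda,M}(\hat\y)=\EE{{\rm{MSE}}_{\hat\y}(\y)}+\lambda\,\EE{\EE{\norm{\tfrac1M\sum_{j=1}^M\hat\y(\x_j)-\y}^2\,\Big|\,\y}},
\end{equation*}
where the outer expectation is with respect to $p^{\rm{fake}}$. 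Because the $\x_j$ are conditionally i.i.d.\ with conditional mean $\y+{\rm{BIAS}}_{\hat\y}(\y)$, the inner conditional expectation equals $\norm{{\rm{BIAS}}_{\hat\y}(\y)}^2+\tfrac1M{\rm{VAR}}_{\hat\y}(\y)$; substituting ${\rm{MSE}}={\rm{VAR}}+\norm{{\rm{BIAS}}}^2$ yields the clean form
\begin{equation*}
J_{\lambda,M}(\hat\y)=\Big(1+\tfrac{\lambda}{M}\Big)\EE{{\rm{VAR}}_{\hat\y}(\y)}+(1+\lambda)\,\EE{\norm{{\rm{BIAS}}_{\hat\y}(\y)}^2}.
\end{equation*}

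Next I would compare the BCE minimizer $\hat\y^\star_{\lambda,M}$ of $J_{\lambda,M}$ over $\mathcal{H}$ with the MVUE $\Tilde{\y}\in\mathcal{H}$ guaranteed by the first assumption. Since $\Tilde{\y}$ is unbiased, $J_{\lambda,M}(\Tilde{\y})=(1+\tfrac{\lambda}{M})\,C$ with $C:=\EE{{\rm{VAR}}_{\Tilde{\y}}(\y)}<\infty$ by the third assumption. Optimality of $\hat\y^\star_{\lambda,M}$ gives $J_{\lambda,M}(\hat\y^\star_{\lambda,M})\le(1+\tfrac{\lambda}{M})\,C$, and discarding either of the two nonnegative terms of $J_{\lambda,M}$ yields
\begin{equation*}
\EE{\norm{{\rm{BIAS}}_{\hat\y^\star_{\lambda,M}}(\y)}^2}\le\frac{(1+\lambda/M)\,C}{1+\lambda},\qquad\EE{{\rm{VAR}}_{\hat\y^\star_{\lambda,M}}(\y)}\le C.
\end{equation*}
Letting $\lambda\to\infty$ the first bound tends to $C/M$, and then letting $M\to\infty$ it tends to $0$: along this route the averaged squared bias of the BCE solution vanishes while its averaged variance never exceeds that of the MVUE.

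It remains to upgrade these averaged statements to pointwise ones. By the second assumption $p^{\rm{fake}}$ is positive on all of $S$, so $\EE{\norm{{\rm{BIAS}}_{\hat\y^\star}(\y)}^2}\to0$ forces ${\rm{BIAS}}_{\hat\y^\star}(\y)\to\0$ for $p^{\rm{fake}}$-a.e.\ $\y$, i.e.\ the limiting estimator is unbiased on $S$. By the definition of the MVUE, every unbiased estimator has ${\rm{VAR}}(\y)\ge{\rm{VAR}}_{\Tilde{\y}}(\y)$ for all $\y$, hence averaged variance at least $C$; together with $\EE{{\rm{VAR}}_{\hat\y^\star}(\y)}\le C$ this pins the averaged variance to $C$ and forces ${\rm{VAR}}_{\hat\y^\star}(\y)={\rm{VAR}}_{\Tilde{\y}}(\y)$ for a.e.\ $\y$, hence, using positivity of $p^{\rm{fake}}$ once more, for every $\y\in S$. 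Thus in the limit the BCE solution is unbiased and attains the MVUE variance profile, i.e.\ it is an MVUE; in particular, in the regimes where the MVUE attains the Cramer Rao bound (for instance the i.i.d.\ large-sample setting in which the MLE is MVUE), so does the BCE.

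The step I expect to be the main obstacle is the $N\to\infty$ reduction: the computation above treats $\hat\y$ as fixed, whereas $\hat\y^\star_{\lambda,M}$ is itself sample-dependent, so one genuinely needs a \emph{uniform} law of large numbers over $\mathcal{H}$ (equivalently, an M-estimation / epi-convergence argument showing that empirical minimizers track the minimizers of $J_{\lambda,M}$), which calls for complexity control on the architecture and finiteness of the relevant moments. A second, more delicate point is the passage from ``vanishing averaged squared bias together with averaged variance $\le C$'' to ``genuinely an MVUE'': at finite $(\lambda,M)$ the solution is only approximately unbiased, so identifying its variance with ${\rm{VAR}}_{\Tilde{\y}}$ along the limit uses a mild closedness/semicontinuity property of $\mathcal{H}$, plus positivity of $p^{\rm{fake}}$ to go from a.e.\ to everywhere. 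The ordering of the three limits is harmless precisely because of the explicit inequalities above, and possible non-uniqueness of the MVUE simply means that ``the MVUE'' should be read as the common variance profile shared by all MVUEs in $\mathcal{H}$.
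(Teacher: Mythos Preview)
Your argument is correct and reaches the same conclusion as the paper, but the route is somewhat different and, in one respect, sharper. The paper sends $M$ and $N$ to infinity together so that the empirical objective becomes the population functional
\[
L_{\rm BCE}(\hat\y)=\EE{\norm{\hat\y(\x)-\y}^2}+\lambda\,\EE{\norm{\EE{\hat\y(\x)-\y\mid\y}}^2},
\]
and then argues by contradiction: if the minimizer were biased at some $\y$, the second term would be strictly positive, so for $\lambda$ large enough the MVUE would beat it; if it is unbiased, the MVUE beats it on the first term by definition. You instead keep $M$ finite after $N\to\infty$, obtain the exact decomposition $J_{\lambda,M}=(1+\lambda/M)\,\EE{{\rm VAR}}+(1+\lambda)\,\EE{\norm{{\rm BIAS}}^2}$, and extract the explicit inequalities $\EE{\norm{{\rm BIAS}}^2}\le C(1+\lambda/M)/(1+\lambda)$ and $\EE{{\rm VAR}}\le C$. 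This buys you a concrete rate in $(\lambda,M)$ and removes the slight circularity in the paper's ``for sufficiently large $\lambda$'' step (there the threshold depends on the very minimizer one is trying to control). Your honest flagging of the uniform-LLN/M-estimation gap and of the a.e.-to-everywhere upgrade is apt; the paper's proof glosses over both points in the same way, so you are not missing anything the paper supplies.
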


If an MVUE exists in the problem, the first assumption can be met by choosing a sufficiently expressive class of estimators. The second and third assumptions are technical and are less important in practice. 

\begin{proof}
The main idea of the proof is that because the squared bias is non negative, it is equal to zero for all $\y$ if and only if its expectation over any non-singular prior is zero. Thus, taking $\lambda$ to infinity enforces a solution that is unbiased for any value of $\y$. Among the unbiased solutions, only the MSE term in the BCE is left and thus the solution is the MVUE. 

For sufficiently large $M$ and $N$, the objective of (\ref{BCE}) converges to its population form: 
\begin{equation}\label{BCE_loss}
    L_{\rm{BCE}} = \EE{\norm{\hat{\y}(\x)-\y}^2} + \lambda \EE{\norm{\EE{\hat{\y}(\x)-\y|\y}}^2}.
\end{equation}
The MVUE satisfies
\begin{equation*}
    \EE{\Tilde{\y}(\x)-\y|\y}=0,\qquad \forall \; \y\in S.
\end{equation*}
Thus 
\begin{equation*}
    \EE{\norm{\EE{\Tilde{\y}(\x)-\y|\y}}^2}=0.
\end{equation*}

Now we show that the BCE objective (\ref{BCE_loss}) of the MVUE $\Tilde{\y}(\x)$ is smaller both from any biased solution $\hat{\y}_1(\x)$ and any unbiased solution $\hat{\y}_2(\x)$. 
First, assume that the BCE
 $\hat{\y}_1(\x)$ is biased for some $\y\in S$:
\begin{equation*}
    \EE{\hat{\y}_1(\x)-\y|\y}\neq0.
\end{equation*}
 Thus  
 \begin{equation*}
     \EE{\norm{\EE{\hat{\y}_1(\x)-\y|\y}}^2}>0.
 \end{equation*}
Since $\EE{\norm{\Tilde{\y}(\x)-\y}^2}$ is finite, this means that for sufficiently large $\lambda$ we have a contradiction
 \begin{equation*}
L_{\rm{BCE}}(\Tilde{\y}) < L_{\rm{BCE}}(\hat{\y}_1).
 \end{equation*}

Secondly, assume that the BCE $\hat{\y}_2$ is an unbiased estimator.
By the definition of MVUE, for all $\y\in S$ 
\begin{equation*}
    \EE{\norm{\Tilde{\y}(\x)-\EE{\Tilde{\y}|\y}}^2|\y}\leq\EE{\norm{\hat{\y}_2(\x)-\EE{\hat{\y}_2|\y}}^2|\y}
\end{equation*}
and
\begin{equation*}
    \EE{\norm{\Tilde{\y}(\x)-\y}^2|\y}\leq\EE{\norm{\hat{\y}_2(\x)-\y}^2|\y}.
\end{equation*}
Thus
\begin{eqnarray*}
    L_{\rm{BCE}}(\Tilde{\y})&=&\EE{\norm{\Tilde{\y}(\x)-\y}^2} \\ 
    &\leq & \EE{\norm{\hat{\y}_2(\x)-\y}^2} = L_{\rm{BCE}}(\hat{\y}_2).
\end{eqnarray*}
Together, the BCE loss of the MVUE is smaller than the BCE loss of any other estimator whether it is biased or not, completing the proof.
\end{proof}

To summarize, EMMSE (\ref{EMMSE}) is optimal on average with respect to the fictitious prior. Otherwise, it can perform badly (see experiments in Section \ref{experiments}).  On the other hand, if an MVUE for the problem exists, BCE is optimal for any value of $\y$ among the unbiased estimators, and is independent on the choice of the fictitious prior. In problems where a statistically efficient estimator exists, BCE achieves the Cramer Rao Bound (CRB), making its performance both optimal and predictable for any value of $\y$.

\section{Linear BCE}
In this section we focus on the linear case in which the BCE has a closed form. The section is divided into two parts. In the first part we restrict the estimator to be linear where in the second part we assume that  the likelihood model is also linear.

First we restrict attention to the class of linear estimators which is easy to fit, store and implement.
Note that using non linear features, this is applicable to any likelihood model and can be very expressive using random features \cite{rahimi2007random} or deep features (by replacing the last layer of a DNN, see for example in \cite{rosenfeld2022domain,kirichenko2022last}). For simplicity, we also assume perfect training with $N,M\rightarrow \infty$ so that the empirical means converge to their true expectations. 
The following theorem gives a closed form solution to the linear BCE (LBCE):
\begin{theorem}
Consider the the BCE optimization problem \ref{BCE} such that the estimator $\hat \y$ is restricted to the linear hypotheses class $\hat \y = \A\x$
Then the solution is given by:
\begin{align}
\label{lbce1}\A  &=  
\EE{\y\x^T}\left(\frac{1}{\lambda+1}\EE{\x\x^T}+\frac{\lambda}{\lambda+1}\R\right)^{-1}
\end{align}
where
\begin{align}
\R&=\EE{\EE{\x|\y}\EE{\x^T|\y}},
\end{align}
and we assume all are invertible. All the expectations over $\y$ are with respect to the fictitious prior.
\end{theorem}
The proof is straight forward by plugging the linear function into (\ref{BCE}), and taking the derivative to zero. The full derivation is in Appendix A.

Plugging in $\lambda=0$ in (\ref{lbce1}) yields the well known LMMSE  \cite[p. 380]{kay1993fundamentals}:
\begin{equation}\label{linreg}
\A \stackrel{\lambda=0}{\rightarrow} \EE{\y\x^T}\left(\EE{\x\x^T}\right)^{-1}.
\end{equation}
Just like LMMSE, the BCE in (\ref{lbce1}) holds for arbitrary distributions. It can account for highly non-linear relations between $\y$ and $\x$ by using pre-defined non-linear features of $\x$.

Next, we assume that the statistical relation between $\x$ and $\y$ is also linear plus additive zero mean noise. The solution to this setting is well known as the Gauss Markov theorem. As expected, BCE agrees with this special case.

\begin{theorem}
Consider the linear case where

\begin{align}\label{double_linear model}
    \boldsymbol{x}&=\boldsymbol{Hy+n}\nonumber\\
\hat{\boldsymbol{y}}&=\boldsymbol{A}{\boldsymbol{x}}
\end{align}
where ${\boldsymbol{n}}$ is a zero mean random vector with covariance $\Sig_{\n}$ and $\boldsymbol{y}$ is a parameter vector with a fictions prior with zero mean and covariance $\Sig_{\y}$.
For $N,M\rightarrow\infty$, LBCE reduces to 
\begin{align}\label{lbce2}
\A  &=   \left(\H^T\Sig_{\n}^{-1}\H+\frac{1}{\lambda+1}\Sig_{\y}^{-1}\right)^{-1}\H^T\Sig_{\n}^{-1} 
\end{align}
and we assume all are invertible. All the expectations over $\y$ are with respect to the fictitious prior.
\end{theorem}
The proof is simple and is based on plugging (\ref{double_linear model}) into (\ref{lbce1}) and using the matrix inversion lemma.

Equivalently, (\ref{lbce2}) become:
\begin{equation}
\label{bayeslinreg}
   \A=\left(\H^T\Sig_{\n}^{-1}\H+\Sig_{\y}^{-1}\right)^{-1}\H^T\Sig_{\n}^{-1}
\end{equation}
which is known as Bayesian linear regression \cite[p. 153]{bishop2006pattern}.
On the other hand, Plugging $\lambda\rightarrow\infty$ in (\ref{lbce2}) yields the seminal weighted least squares (WLS) estimator \cite[p. 226]{kay1993fundamentals}:
\begin{equation}\label{genWLS}
\A \stackrel{\lambda\rightarrow \infty}{\rightarrow} \left(\H^T\Sig_{\n}^{-1}\H\right)^{-1}\H^T\Sig_{\n}^{-1}.
\end{equation}
For finite $\lambda>0$, linear BCE provides a flexible bridge between the LMMSE and WLS estimators. The parameter $\lambda$ attenuates the effect of the fictitious prior by dividing the prior covariance in (\ref{lbce2}) by $(\lambda+1)$.

\section{BCE for Averaging}
The main motivation for BCE is approximating the MVUE using deep neural networks. 
In this section, we consider a different motivation to BCE where unbiasedness is a goal on its own. Specifically, BCEs are advantageous in scenarios where multiple estimators are combined together for improved performance. Typical examples include sensor networks where each sensor provides a local estimate of the unknown, or computer vision applications where the same network is applied on different crops of a given image to improve accuracy \cite{krizhevsky2012imagenet, szegedy2015going}. 

The underlying assumption in BCE for averaging is that we have access, both in training and in test times, to multiple $\x_{ij}$ associate with the same $\y_i$. This assumption holds in learning with synthetic data (e.g., Algorithm 1), or real world data with multiple views or augmentations. In any case, the data structure allows us to learn a single network $\hat{\y} (\cdot)$ which will be applied to each of them and then output their average as summarized in Algorithm 2. The following theorem then proves that BCE results in asymptotic consistency.

\begin{algorithm}
\caption{BCE for averaging}\label{alg:BCE2}
\begin{itemize}
    \item Let $\hat{\y}(\cdot)$ be the solution to  BCE in (\ref{BCE}).
    \item Define the global estimator as 
    \begin{equation}\label{fusion}
    \overline{\y}(\x)=\frac{1}{M_t}\sum_{j=1}^{M_t}\hat{\y}(\x_j),
\end{equation}
    where $M_t$ is the number of local estimators at test time.

\end{itemize}
\end{algorithm}

\begin{figure*}[h]
\centering
\captionsetup{justification=centering}

\center{\includegraphics[width=1.0\textwidth]{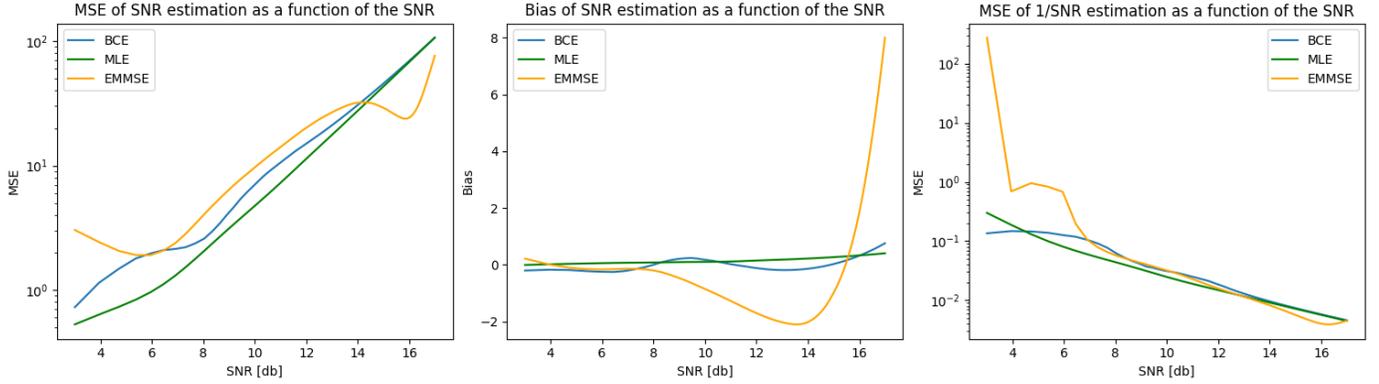}}
\caption{Bias of SNR, MSE of SNR and MSE of inverse SNR.}

\label{fig:bias_and_mse_SNR}%
\end{figure*}

\begin{theorem}
Assume that an unbiased estimator $\y^*(\x)$  with finite variance exists within the hypothesis class, and that a BCE was learned with 
sufficiently large $\lambda$, $M$ and $N$. Consider the case in which $M_t$ independent and identically distributed (i.i.d.) measurements $\x_j$ of the same $\y$ are available in test time. Then, the average BCE in (\ref{fusion}) is asymptotically consistent when $M_t$ increases. 
\end{theorem}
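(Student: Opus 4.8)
The plan is to reduce the claim to a single application of the law of large numbers, once we have established that the learned network $\hat{\y}(\cdot)$ is itself unbiased with finite variance. First I would re-run the ``biased solution is ruled out'' half of the proof of Theorem 1. For sufficiently large $M$ and $N$ the empirical objective in (\ref{BCE}) converges to its population form $L_{\rm BCE}$ in (\ref{BCE_loss}). The assumed competitor $\y^*\in\mathcal{H}$ has zero bias, so its squared-bias term vanishes, and it has finite variance, so $L_{\rm BCE}(\y^*)=\EE{\norm{\y^*(\x)-\y}^2}<\infty$. Hence any minimizer $\hat\y$ of $L_{\rm BCE}$ satisfies $\lambda\,\EE{\norm{\EE{\hat\y(\x)-\y\,|\,\y}}^2}\le L_{\rm BCE}(\y^*)<\infty$; letting $\lambda\to\infty$ forces $\EE{\norm{\EE{\hat\y(\x)-\y\,|\,\y}}^2}=0$. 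Since the fictitious prior is non-singular and the integrand is non-negative, $\EE{\hat\y(\x)-\y\,|\,\y}=\0$ for every $\y\in S$, i.e.\ the learned estimator is unbiased. Moreover $\EE{\norm{\hat\y(\x)-\y}^2}\le L_{\rm BCE}(\y^*)<\infty$, so $\hat\y$ has finite variance ${\rm VAR}_{\hat\y}(\y)$ for almost every $\y$ with respect to the prior, hence --- by non-singularity --- on a set of full measure.

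Next, fix $\y\in S$. At test time the measurements $\x_1,\dots,\x_{M_t}$ are i.i.d.\ draws from $p(\x;\y)$, so the vectors $\hat\y(\x_1),\dots,\hat\y(\x_{M_t})$ are i.i.d.\ with common mean $\EE{\hat\y(\x)\,|\,\y}=\y$ (by the unbiasedness just established) and finite second moment. The strong law of large numbers then yields
\[
\overline{\y}(\x)=\frac{1}{M_t}\sum_{j=1}^{M_t}\hat\y(\x_j)\;\xrightarrow[M_t\to\infty]{}\;\y
\]
almost surely. Equivalently, the weak law gives convergence in probability, with the quantitative rate $\EE{\norm{\overline{\y}(\x)-\y}^2\,|\,\y}=\tfrac{1}{M_t}\,{\rm VAR}_{\hat\y}(\y)\to 0$ following directly from independence and unbiasedness. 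This is precisely asymptotic consistency of the global estimator in (\ref{fusion}).

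The main obstacle --- and the only place where care is genuinely needed --- is the first step: arguing that the empirical BCE minimizer inherits \emph{exact} unbiasedness (for all $\y$, not merely on average) from the population minimizer. This is the same double-limit structure used in Theorem 1: $N,M$ large so the empirical means in (\ref{BCE}) converge to the expectations in (\ref{BCE_loss}), then $\lambda$ large so the squared-bias penalty dominates. It relies on the standard regularity needed for those empirical averages to converge uniformly over $\mathcal{H}$, which we take as given. Note that, unlike Theorem 1, no MVUE need exist here: the unbiased finite-variance member $\y^*$ is used only as a finite-loss benchmark to eliminate biased minimizers, after which the conclusion follows from the law of large numbers alone.
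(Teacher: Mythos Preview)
Your proposal is correct and follows essentially the same route as the paper: first invoke the Theorem~1 argument to conclude that for large $\lambda,M,N$ the learned $\hat\y$ is unbiased with finite variance, then use independence of the $\hat\y(\x_j)$ to show the bias of $\overline\y$ stays at zero while its variance decays like $M_t^{-1}$, giving consistency. The paper states this as a direct bias/variance computation rather than citing the law of large numbers, but the content is the same; your additional remark that only an unbiased finite-variance member of $\mathcal H$ (not an MVUE) is needed as the benchmark is a useful clarification that the paper leaves implicit.
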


\begin{proof}
Following the proof of theorem 1, BCE with a sufficiently large $\lambda$, $M$ and $N$ results in a unbiased estimator, if one exists within the hypothesis class.
The global metrics satisfy:
\begin{align}
    {\rm{BIAS}}_{\overline{\y}}(\boldsymbol{y})&=\frac{1}{M}\sum_{j=1}^M{\rm{BIAS}}_{\hat{\y}}(\boldsymbol{y})={\rm{BIAS}}_{\hat{\y}}(\boldsymbol{y})\nonumber\\
     {\rm{VAR}}_{\overline{\y}}(\boldsymbol{y})&=\frac{1}{M^2}\sum_{j=1}^M{\rm{VAR}}_{\hat{\y}}(\boldsymbol{y})=\frac{1}{M}{\rm{VAR}}_{\hat{\y}}(\boldsymbol{y})\stackrel{M\rightarrow \infty}{\rightarrow}0.
\end{align}
Thus, the global variance decreases with $M_t$, whereas the global bias remains constant, and for an unbiased local estimator it is equal to zero.
\end{proof}

\section{Experiments}\label{experiments}

In this section, we present numerical experiments results. We focus on the main ideas and conclusions and leave the details to the appendix. Reproducible code with all the implementation details including architectures and hyperparameters will be provided at \url{https://github.com/tzvid/BCE}.

\subsection{SNR Estimation}\label{SNR Estimation}
Our first experiment addresses a non-convex estimation problem of a single unknown. The unknown is scalar and therefore we can easily compute the MLE and visualize its performance. Specifically, we consider 
non-data-aided SNR estimation \cite{alagha2001cramer}. The received signal is 
\begin{equation}
    x_l = a_lh + w_l
\end{equation}
where $a_l=\pm 1$ are equi-probable binary symbols, $h$ is an unknown signal and $w_l$ is a white Gaussian noise with unknown variance denoted by $\sigma^2$ and $l=1,\cdots,Q$. The goal is to estimate the SNR defined as $y=\frac{h^2}{\sigma^2}$. Different estimators were proposed for this problem, including MLE \cite{wiesel2002non} and method of moments \cite{pauluzzi2000comparison}. 
For our purposes, we compare the MLE with two identical networks trained on synthetic data using EMMSE and BCE loss functions.

Figure \ref{fig:bias_and_mse_SNR} shows the MSE and the bias of the different estimators as a function of the SNR. 
It is evident that BCE is a better approximation of MLE than EMMSE. EMMSE is very biased towards a narrow regime of the SNR. This is because the MSE scales as the square of the SNR and the average MSE loss is dominated by the large MSE examples. For completeness, we also plot the MSE in terms of inverse SNR defined as $\EE{\left(\frac{1}{\hat{y}}-\frac{1}{y}\right)^2}$.
Functional invariance is a well known and attractive property of MLE. The figure shows that both MLE and BCE are robust to the inverse transformation, whereas EMMSE is unstable and performs poorly in low SNR.

\subsection{Structured Covariance Estimation}\label{Structured Covariance Estimation}
Our second experiment considers a more interesting multivariate structured covariance estimation.  Specifically, we consider the estimation of a sparse covariance matrix \cite{chaudhuri2007estimation}. The measurement model is 
\begin{equation}
    p(\x;\Sig)\sim \mathcal{N}({\boldsymbol{0}},\Sig)\nonumber
\end{equation}
where $\y=[y^{(1)},\cdots,y^{(9)}]^T$ and
\begin{equation}\label{structured covariance}
\Sig = 
    \begin{pmatrix}
1+y^{(1)} & 0 & 0 & \frac{1}{2}y^{(6)} & 0 \\
0 & 1+y^{(2)} & 0 & \frac{1}{2}y^{(7)} & 0 \\
0 & 0 & 1+y^{(3)} & 0 & \frac{1}{2}y^{(8)} \\ 
\frac{1}{2}y^{(6)} & \frac{1}{2}y^{(7)} & 0 & 1+y^{(4)} & \frac{1}{2}y^{(9)} \\
0 & 0 & \frac{1}{2}y^{(8)} & \frac{1}{2}y^{(9)} & 1+y^{(5)} 
\end{pmatrix}  
\end{equation}
and $0\leq y^{(k)} \leq1$ are unknown parameters. We train a neural network using $p^{\rm{fake}}(y^{(k)})\sim \mathcal{U}(0,1)$ using both EMMSE and BCE.  Computing the MLE is non-trivial in these settings and we compare the performance to the theoretical asymptotic variance defined by the CRB \cite{kay1993fundamentals}.
The CRB depends on the specific values of $\y$. Therefore we take random realizations and provide scatter plots ordered according to the CRB value. Finally, as another baseline, we also report the result of an additional model named NORM. Following \cite{deng2023uncertainty}, NORM is trained with MSE normalized by the CRB. The idea is that the loss of each sample will be normalized by its "difficulty", and the performance will not be govern by the difficult examples. 

Figure \ref{fig:Structured Covariance} presents the results of two simulations. In the first, we generate data according to the training distribution $\y\sim\mathcal{U}(0,1)$. As expected, EMMSE which was optimized for this distribution provides the best MSEs. In the second, we generate test data according to a different distribution $\y\sim\mathcal{U}(0,0.1)$. Here the performance of EMMSE significantly deteriorates. NORM performs better in this out of distribution setting but is still far from the CRB. In contrast, in both settings BCE is near MVUE and provides MSEs close to the CRB while ignoring the prior distribution used in training. 

Note that BCE outperforms the NORM baseline and is also significantly easier to implement. NORM requires exact knowledge of the underlying model and the corresponding CRB. The main advantage of BCE is that it can be easily computed in a data-driven manner without access to the underlying model. The next example illustrates such real world settings.

\begin{figure}[h]
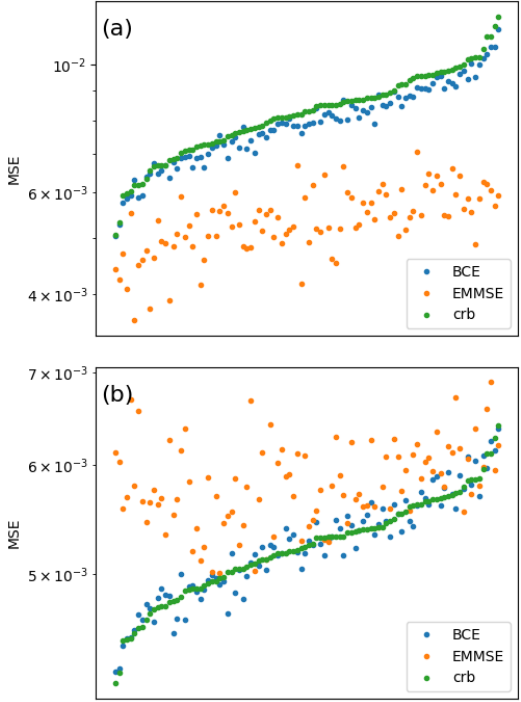

\centering

  \center{\includegraphics[width=0.45\textwidth]{Structured_covariance/in.png}}
\center{\includegraphics[width=0.45\textwidth]{Structured_covariance/out01.png}}
\caption{Scatter plot of MSEs ordered by CRB values. In (a) the tested $\y$'s are generated from the training distribution. In (b) the test distribution is different. BCE is near-MVUE and close to the CRB for both distributions, whereas EMMSE is better in (a) and weaker in (b).}

\label{fig:Structured Covariance}%
\end{figure}

\subsection{RSII Localization}\label{RSSI Localization}
Our third experiment considers BCE of unbiased localization using RSSI on real data. Specifically, we used the BLE RSSI Dataset for Indoor localization and Navigation Data Set from \cite{mohammadi2017semi}. The dataset contains 1420 labeled examples of RSSI readings of an array of 13 beacons as input and the xy location as outputs. We used two thirds of the data for training and a third for test. We evaluate the MSE and the average squared bias of EMMSE and BCE. To estimate the average squared bias, we use examples with the same true location. We only evaluate locations with at least 8 different examples. Table \ref{RSSI table} summarizes the results, showing that while EMMSE is better in terms of MSE, BCE results in a smaller bias. To gain more intuition, Fig \ref{fig:RSSI} illustrates this difference on a few selected examples. For each example, we draw the ground truth location (GT) and the model predictions using the different inputs. We also plot uncertainty ellipses that correspond to two standard deviations. It is easy to see that  while the EMMSE has smaller variance, the center of BCE estimations is closer to the true location.
\begin{table}[t]
\caption{MSE and Bias of EMSSE and BCE for RSSI Localization}
\vskip 0.15in
\begin{center}
\begin{sc}
\begin{tabular}{lcr}
\toprule
Metric/Model & MSE & Bias$^2$  \\
\midrule
EMMSE & 1.5 & 0.7 \\
BCE & 2.2 & 0.4 \\

\bottomrule
\end{tabular}
\end{sc}
\end{center}
\vskip -0.1in
\label{RSSI table}

\end{table}

\begin{figure}[h]
\centering

  \center{\includegraphics[width=0.25\textwidth]{RSSI/11.jpg}\includegraphics[width=0.25\textwidth]{RSSI/8.jpg}}
    \center{\includegraphics[width=0.25\textwidth]{RSSI/27.jpg}\includegraphics[width=0.25\textwidth]{RSSI/54.jpg}}
\caption{RSSI localization: Different estimates of EMMSE and BCE for the same location, for 4 different locations. While the estimations of EMMSE has smaller variance, the center of BCE estimations is closer to the true location.}

\label{fig:RSSI}%
\end{figure}

\subsection{Image Classification with Soft Labels }\label{Image Classification with Soft Labels}
Our fourth experiment considers BCE for averaging in the context of image classification. We consider the popular CIFAR10 dataset. BCE is designed for regression rather than classification. Therefore we consider soft labels as proposed in the knowledge distillation technique \cite{hinton2015distilling}. The soft labels are obtained using a strong ``teacher'' network from \cite{yu2018deep}. To exploit the benefits of averaging we rely on data augmentation in the training and test phases \cite{krizhevsky2012imagenet}. 
For augmentation, we use random cropping and flipping. We train two small ``student'' convolution networks with identical architectures using the EMMSE loss and the BCE loss. More precisely, following other distillation works, we also add a hard valued cross entropy term to the losses. 

Figure \ref{fig:CIFAR} compares the accuracy of EMMSE vs BCE as a function of the number of test-time data augmentation crops. It can be seen that while on a single crop EMMSE achieves a slightly better accuracy, BCE achieves better results when averaged over many crops in the test phase. 
\begin{figure}[h]
\centering

  \center{\includegraphics[width=0.45\textwidth]{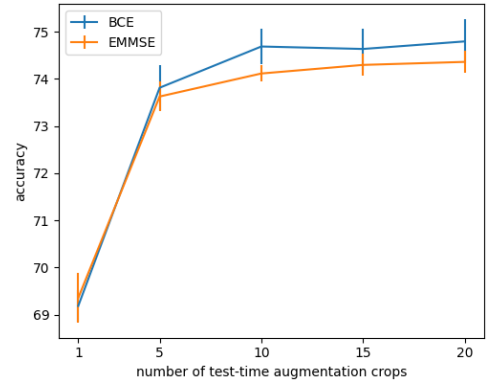}}
\caption{BCE for averaging: accuracy as a function of the number of test-time augmentation crops.}

\label{fig:CIFAR}%
\end{figure}

 \section{Summary}
In recent years, deep neural networks are replacing classical algorithms for estimation in many fields. While deep neural networks give remarkable improvement in performance ``on average'', in some situations one would prefer to use classical frequentist algorithms that have guarantees on their performance on any value of the unknown parameters. In this work we show that when a statistically efficient estimator exists, deep neural networks can be used to learn it using a bias constrained loss, provided that the architecture is expressive enough. 
 
BCE asymptotically converges to an MVUE but there are a few important caveats worth mentioning. First, in general problems, an MVUE does not always exist and the results only hold asymptotically. Just like MLE, BCE only attains the CRB when the number of samples is large and the errors are small. Otherwise, bias acts as a regularization mechanism which is typically preferable. Second, in many problems choosing a uniform or sufficiently wide prior leads to near MVUEs. BCE is mostly needed when the MSE changes significantly with the unknown parameters. In particular, two of our experiments involved variance estimation where the MSE scales quadratically with the unknown variance.



\section{ACKNOWLEDGMENT} This research was partially supported by ISF grant  2672/21.

\appendices

\section{Proofs of Theorems 2-3}
\subsection{Proof of theorem 2}
We insert $\hat{\y}=\A\x$ in (\ref{BCE}). Taking $M,N\rightarrow\infty$ gives the true expectations and thus:
\begin{align}
\rm{BCE}    &= {\rm{MSE}} + \lambda {\rm{BIAS}}^2 \nonumber\\
            &= \EE{\norm{\A\x-\y}^2} + \lambda \EE{\norm{\EE{\A\x-\y|\y}}^2}.
\end{align} 

Now the $\rm MSE$ term is equal to:

\begin{align}
    {\rm MSE} &= \EE{\norm{\A\x-\y}^2} \nonumber\\
              &= \EE{\x^T\A^T\A\x-\x^T\A^T\y-\y^T\A\x+\y^T\y} \nonumber\\
              &= {\rm Tr}\left(\A\EE{\x\x^T}\A^T - 2\A\EE{\x\y^T}+ \EE{\y\y^T}\right),
\end{align}
and the $\rm BIAS$ term is:
\begin{align}
    {\rm BIAS} &= \EE{\norm{\EE{\A\x-\y|\y}}^2} \nonumber\\
               &= \EE{\EE{\x^T\A^T|\y}\EE{\A\x|\y} - \EE{\x^T\A^T|\y}\y}  \nonumber\\
               &- \EE{\y^T\EE{\A\x|\y} + \y^T\y}\nonumber \\
               &= {\rm Tr}\left(\A\R\A^T- 2\A\EE{\EE{\x|\y}\y^T}\right) \nonumber\\
               &= {\rm Tr}\left(\A\R\A^T- 2\A\EE{\x\y^T}\right).
\end{align}
Thus:
\begin{align}
    {\rm BCE} &= {\rm Tr}\left(\A\left(\EE{\x\x^T}+\lambda\R\right)\A^T\right) \nonumber\\
              &- 2\left(\lambda+1\right){\rm Tr}\left(\A\EE{\x\y^T}\right) + \EE{\y^T\y}.
\end{align}
Taking the derivative with respect to $\boldsymbol{A}$ and
equating to zero:
\begin{align}
    \frac{\partial\rm{BCE}}{\partial\A} &=2\left(\EE{\x\x^T}+\lambda\R\right)\A^T 
                                        -2\left(\lambda+1\right)\EE{\x\y^T}.
\end{align}
Finally, 
\begin{align}
\A  &=  
\EE{\y\x^T}\left(\frac{1}{\lambda+1}\EE{\x\x^T}+\frac{\lambda}{\lambda+1}\R\right)^{-1}.
\end{align}

\subsection{Proof of theorem 3}
Using $\x=\H\y+ \n$, we obtain:
\begin{align}\label{ExyExx}
    \EE{\y\x^T} &= \EE{\y\y^T\H^T+\y\n^T} = \Sig_{\y}\H^T \nonumber \\
    \EE{\x\x^T} &= \EE{\H\y\y^T\H^T+\n\y\H^T+\H\y\n^T+\n\n^T} \nonumber\\
                &= \H\Sig_{\y}\H^T + \Sig_{n}, 
           \end{align}
and 
 \begin{equation}\label{R}
    \R          = \EE{\EE{\H\y+\n|\y}\EE{\H\y+\n|\y}^T} 
                = \H\Sig_{\y}\H^T.
 \end{equation}
Plugging (\ref{ExyExx}) and (\ref{R}) in (\ref{lbce1}) gives:
\begin{align}
        \A &= \Sig_{\y}\H^T\left(\H\Sig_{\y}\H^T+\frac{1}{\lambda+1}\Sig_{n}\right)^{-1} \nonumber\\
           &= \left(\H^T\Sig_{\n}^{-1}\H+\frac{1}{\lambda+1}\Sig_{\y}^{-1}\right)^{-1}\H^T\Sig_{\n}^{-1},
\end{align}
where the last step obtained using the matrix inversion lemma, completing the proof.

\section{Implementation Details}

\subsection{Implementation details for Section \ref{SNR Estimation}}
We train a simple fully connected model with one hidden layer. First the data is normalized by the second moment and then the input is augmented by hand crafted features: the fourth and sixth moments and different functions of them.
We train the network using $Q=50$ synthetic data in which the mean is sampled uniformly in $[1,10]$ and then SNR is sampled uniformly in $[2,50]$ (which corresponds to $[3dB,16dB]$). The data is generated independently in each batch. We trained the model using the standard MSE loss, and using BCE with $\lambda=1000$. We use ADAM solver with a multistep learning scheduler. We use batch sizes of $N=10$ and $M=100$ as defined (\ref{BCE}).

\subsection{Implementation details for Section \ref{Structured Covariance Estimation}}
We train a neural network for estimation the covariance matrix with the following architecture: First the sample covariance  $\C_0$ is calculated from the input $\x$ as it is the sufficient statistic for the covariance in Gaussian distribution. Also a vector $\boldsymbol{\alpha}_{k=0}$ is initialized to $\boldsymbol{\alpha}_{k=0}=\frac{1}{2}\1_9$ and a vector $\boldsymbol{v}_{k=0}$ is initialized to zero. Next, a one hidden layer fully connected network with concatenated input of $\C_0$ and $\boldsymbol{v}_{k=0}$ is used to predict a modification $\Delta\boldsymbol{\alpha}$ and $\Delta\boldsymbol{v}$ for the vectors $\boldsymbol{\alpha}_k$ and  $\boldsymbol{v}_{k}$ respectively, such that $\boldsymbol{\alpha}_{k+1}$ = $\boldsymbol{\alpha}_k + 0.1\Delta\boldsymbol{\alpha}$ and similarly for $\boldsymbol{v}_{k+1}$. Then an updated covariance $\C_{k+1}$ is calculated using $\boldsymbol{\alpha}_{k+1}$ and equation (\ref{structured covariance}). The process is repeated (with the updated $\C_k$ and $\boldsymbol{v}_k$ as an input to the fully connected network) for 50 iterations. The final covariance is the output of the network.
The network in is tranied on synthetic data in which the covariance the paramaters of the covariance are generated uniformly in their valid region and then $M$ different $X$'s are generated from a normal distribution with a zero mean and the generated covariance.
We use an ADAM solver with a "ReduceLROnPlateau" scheduler with the desired loss (BCE loss of BCE and MSE loss for EMMSE) on a synthetic validation set. 
We trained the model using the standard MSE loss, and using BCE with $\lambda=1000$.  We use batch sizes of $N=1$ and $M=20$ as defined in (\ref{BCE}). 

For the training of NORM, we use the loss:
\begin{align}
    \sum_{i,j}\(\hat \y(\x_{ij}\) - \y_i)^T \boldsymbol F(\y_i) \(\hat \y(\x_{ij}) - \y_i\)
\end{align}

\subsection{Implementation details for Section \ref{RSSI Localization}}
We use a fully connected neural network with a single hidden layer of size 20. The data (both the inputs and outputs) was normalized to have zero mean and unit variance at each dimension. For the bias term of the BCE loss, at each batch we uniformly sample a location and take the average of the output of the model for all the examples of the same location.

\subsection{Implementation details for Section \ref{Image Classification with Soft Labels}}

We  generate soft labels using a "teacher" network. Specifically, we work on the CIFAR10 dataset, and use a DLA architecture \cite{yu2018deep} which achieves 0.95 accuracy as a teacher. Our student network is a very small convolutional neural network (CNN) with two convolutions layers with 16 and 32 channels respectively and a single fully connected layer.
We now use the following notations: The original dataset is a set of $N$ triplets of images $\x_i$, a one-hot vector of hard labels $\y^h_i$  and a vector of soft labels $\y^s_i$ $\{\x_i,\y^h_i,\y^s_i\}$. In the augmented data, $M$ different images $\x_{ij}$ are generated randomly from each original image $\x_i$ using random cropping and flipping. The output of the network for the class $l$ is denoted by $z_{ij}^l$ and the vector of "probabilities" $\q_{ij}(T)$ is defined by: 
\begin{equation}
    q_{ij}^l(T)=\frac{\exp(q_{ij}^l/T)}{\sum_l\exp(z_{ij}^l/T)}
\end{equation}
where $T$ is a "temperature" that controls the softness of the probabilities.  
We define the following loss functions:
\begin{eqnarray}
L_{hard} &=& \sum_{ij}^{MN} CE(\q_{ij}(T=1), \y^h_i) \nonumber\\
L_{MSE} &=& \sum_{ij}^{MN} \norm{\q_{ij}(T=20) - \y^s_i} ^2 \nonumber\\
L_{bias} &=& \sum_i^N \norm{\sum_j^M \q_{ij}(T=20) - \y^s_i} ^2 
\end{eqnarray}
where $CE$ is the cross-entropy loss.
The regular network and the BCE are trained with the following losses:
\begin{eqnarray}
L_{EMMSE} &=& L_{hard}+L_{MSE} \nonumber\\
L_{BCE} &=& L_{hard}+L_{bias},
\end{eqnarray}
using stochastic gradient decent (SGD). In training, we use 20 different random crops and flips. We test the trained network in five different checkpoints during training and calculate the average and the standard deviation. In test-time we use the same data augmentation as in the training, the scores of the different crops are averaged to get the final score as in algorithm \ref{alg:BCE2}.
Fig. \ref{fig:CIFAR} shows the average and the standard deviation of the accuracy as a function of the number of crops in the training set.

\bibliographystyle{IEEEtranN}
\bibliography{main.bib}

\end{document}